\newtheorem{theorem}{Theorem}
\newtheorem{proposition}{Proposition}
\theoremstyle{definition}
\newtheorem{definition}{Definition}
\theoremstyle{remark}
\newtheorem{remark}{Remark}
\newcommand{\Comp}{\mathrm{Comp}}
\title{Hierarchical Adversarially-Resilient Multi-Agent Reinforcement Learning for Cyber-Physical Systems Security}
\author {
    Saad Alqithami
}
\begin{document}

\maketitle

\begin{abstract}
Cyber-Physical Systems play a critical role in the infrastructure of various sectors, including manufacturing, energy distribution, and autonomous transportation systems. However, their increasing connectivity renders them highly vulnerable to sophisticated cyber threats, such as adaptive and zero-day attacks, against which traditional security methods like rule-based intrusion detection and single-agent reinforcement learning prove insufficient. To overcome these challenges, this paper introduces a novel Hierarchical Adversarially-Resilient Multi-Agent Reinforcement Learning (HAMARL) framework. HAMARL employs a hierarchical structure consisting of local agents dedicated to subsystem security and a global coordinator that oversees and optimizes comprehensive, system-wide defense strategies. Furthermore, the framework incorporates an adversarial training loop designed to simulate and anticipate evolving cyber threats, enabling proactive defense adaptation. Extensive experimental evaluations conducted on a simulated industrial IoT testbed indicate that HAMARL substantially outperforms traditional multi-agent reinforcement learning approaches, significantly improving attack detection accuracy, reducing response times, and ensuring operational continuity. The results underscore the effectiveness of combining hierarchical multi-agent coordination with adversarially-aware training to enhance the resilience and security of next-generation CPS.
\end{abstract}

\section{Introduction} \label{intro}
Cyber-Physical Systems (CPS) underpin critical modern infrastructure by seamlessly integrating computational and communication capabilities with physical processes. These systems have become essential across various domains, such as manufacturing, smart grids, autonomous transportation, and healthcare, offering significant enhancements in automation, efficiency, and real-time decision-making capabilities~\cite{wolf2019safety}. However, their increased interconnectivity and complexity expose CPS to sophisticated and continuously evolving cybersecurity threats, including data tampering, advanced persistent threats (APTs), and distributed denial-of-service (DDoS) attacks~\cite{conti2018iot}. Conventional security solutions, like rule-based intrusion detection systems and single-agent reinforcement learning methods, have struggled to adapt effectively to these evolving threats, especially as attackers increasingly leverage AI-driven strategies to circumvent traditional defenses.

Recent advances in multi-agent reinforcement learning (MARL) offer promising solutions to the security challenges faced by CPS. By distributing decision-making responsibilities among multiple agents, MARL facilitates scalable, coordinated, and adaptive defense strategies that are particularly effective in decentralized and complex environments~\cite{busoniu2010marl}. Hierarchical reinforcement learning further extends this concept, introducing a multi-tier control structure where higher-level policies guide lower-level agents, thereby enhancing scalability, adaptability, and strategic coherence across large-scale CPS deployments~\cite{vezhnevets2017feudal}. Nevertheless, most existing MARL-based security frameworks lack explicit adversarial awareness, rendering them vulnerable to adaptive, AI-driven cyber threats. Purely reactive defensive strategies fall short in environments where adversaries consistently evolve tactics to evade detection~\cite{goodfellow2015adversarial}. Hence, incorporating adversarial training—where defensive agents explicitly learn against evolving attacker strategies—emerges as crucial for proactively enhancing MARL-based defense resilience.

Currently, a unified CPS security approach combining hierarchical coordination and adversarial resilience remains elusive. Most existing MARL frameworks operate under decentralized or flat architectures without hierarchical coordination, limiting their ability to efficiently address sophisticated cyber threats at scale. Our proposed framework, Hierarchical Adversarially-Resilient Multi-Agent Reinforcement Learning (HAMARL), explicitly addresses this critical gap by integrating hierarchical MARL with an adversarial training loop, providing both proactive and adaptive defense mechanisms. Our approach uniquely models both attackers and defenders as learning agents within a competitive-cooperative training environment, enabling continuous adaptation to evolving threats.

The experimental design of this research employs a simulated industrial IoT testbed, carefully chosen to reflect realistic operational conditions found in manufacturing environments. This testbed includes multiple programmable logic controller (PLC)-driven subsystems and sensors communicating via standard industrial protocols, presenting a realistic setting to evaluate security interventions. This realistic and complex environment provides robust grounds to measure the practical effectiveness of our proposed HAMARL framework against varied and adaptive cyber threats, thereby ensuring relevance and potential real-world applicability of the results.

In this paper, we address the following critical research questions:
\begin{enumerate}
\item Can hierarchical MARL improve real-time threat detection and response efficiency in securing CPS environments?
\item Does integrating adversarial training within hierarchical MARL enhance resilience against sophisticated, zero-day cyber attacks compared to standard MARL methods?
\item How does adopting a hierarchical defense structure impact scalability, computational efficiency, and strategic decision-making capabilities in complex CPS environments?
\end{enumerate}

Specifically, the main contributions of this paper are:

\begin{enumerate}
    \item Development of a novel hierarchical multi-agent reinforcement learning architecture specifically designed for enhancing CPS security, promoting scalability, and ensuring efficient response coordination across subsystems.
    \item Introduction of an adversarial training loop to simulate and proactively counteract dynamic, evolving cyber threats, ensuring defense strategies remain effective against adaptive adversaries.
    \item A comprehensive empirical evaluation conducted on a simulated industrial IoT testbed, demonstrating HAMARL's superior performance in terms of detection accuracy, response speed, operational continuity, and resilience compared to traditional MARL and rule-based approaches.
\end{enumerate}

The remainder of this paper is structured as follows. The following Section reviews relevant literature and recent advancements in CPS security. Section 3 introduces the detailed design of our hierarchical adversarially-resilient multi-agent reinforcement learning framework. Section 4 describes our experimental implementation and setup in detail, highlighting the simulation environment and evaluation methodology. Section 5 presents the results of our extensive experimental analysis, examining the effectiveness of HAMARL across various security metrics. Finally, we conclude in Section 6 by discussing potential extensions for future research, including considerations of multi-attacker scenarios, the integration of explainable AI, and practical aspects of real-world deployment.

\section{Related Work}\label{sec:background}

\subsection{Cyber-Physical Systems Security}
Cyber-physical systems (CPS) tightly interlace computational intelligence with physical processes, placing real-time sensing, control, and actuation on the same critical path~\cite{baheti2011cps}. The attack surface therefore spans two distinct yet interdependent layers: (i) legacy industrial networks that expose vulnerable field-bus protocols and (ii) the safety-critical plant itself, where faults propagate into material, financial, and even life-threatening damage~\cite{lee2008cpschallenges}.  Recent work confirms that classical signature‐ or rule-based defenses no longer suffice once sophisticated adversaries employ automated exploit generation and living-off-the-land tactics. Motivated by large-scale red-team exercises such as CAGE4,~\citet{Kiely2025CyberDefenseMARL} report that multi-agent policies trained directly in a realistic cyber range achieve higher containment rates and faster mitigation than monolithic RL baselines, yet they still struggle with long-horizon coordination.  Similar observations emerge from industrial scheduling and smart-city domains: hierarchical controller stacks improve emergency-response dispatch~\cite{Sivagnanam2024EmergencyMARL} and satellite task assignment~\cite{Holder2025SatelliteMARL}, suggesting that CPS security solutions must explicitly model both local and global control planes.

\subsection{Hierarchical and Multi-Agent Reinforcement Learning}
Hierarchy remains a key lever for scaling reinforcement learning to long horizons.  Recent advances range from option-invention under continuous task streams~\cite{Nayyar2025OptionHRL} to hybrid search-and-learn planners that satisfy rich temporal constraints~\cite{Lu2024CoSHRL}. Transformer-based decision models also benefit from multi-level abstractions:~\citet{Ma2024HierDT} show that a hierarchical Decision Transformer achieves markedly better credit assignment on Atari and MuJoCo benchmarks than its flat counterpart. In the multi-agent setting, credit assignment and coordination are further complicated by the need to decompose joint action spaces. Influence-based role learning~\cite{Du2024SCIC}, formation-aware exploration~\cite{Lee2024FoX}, and attention-guided contrastive role encoders~\cite{Yang2025RoleRepMARL} all highlight the importance of structured interaction priors. Offline regimes have also matured:~\citet{Yang2024InSPO} update agents sequentially on fixed logs to avoid out-of-distribution action drift. Despite these gains, most MARL work in safety-critical domains either ignores adversaries or assumes static threat models, limiting robustness.

\subsection{Adversarial Multi-Agent Learning}
Treating security as a sequential game naturally invites adversarial training and game-theoretic analysis. The Bayesian Adversarial Robust Dec-POMDP of~\citet{Li2024ByzantineMARL} formalizes Byzantine failures as latent agent types, enabling honest teammates to adapt on-line. Sub-PLAY~\cite{Ma2024SubPlay} empirically demonstrates that partial observability does not prevent an attacker from degrading state-of-the-art MARL systems, while~\citet{Kalogiannis2024ATMG} provide convergence guarantees for adversarial team Markov games via hidden concave min-max optimization.  Robustness curricula that gradually anneal bounded-rational adversaries to fully strategic opponents~\cite{Reddi2024BoundedAdversarialRL}, regret-based defenses that minimize worst-case observation perturbations~\cite{Belaire2024RegretDefense}, and provably efficient defenses against adaptive policy attacks~\cite{Liu2024AdvPolicyDefense} collectively paint a picture in which alternating‐ or co-training with a live adversary is the most promising path to resilience. Nonetheless, none of these studies couples hierarchical defenders with an adaptive attacker in a safety-critical industrial CPS.

\subsection{Positioning of This Work}
HAMARL unifies the three strands above. Building on hierarchical RL's ability to decompose long-horizon control~\cite{Nayyar2025OptionHRL,Lu2024CoSHRL}, we deploy eight local defenders that specialize to their PLC cells while a global coordinator resolves contention at the plant level. Inspired by adversarial MARL framings~\cite{Li2024ByzantineMARL,Kalogiannis2024ATMG}, we embed an adaptive attacker that continuously evolves scan, DoS, lateral-movement and tampering tactics. Unlike prior CPS-security approaches that rely on static rule sets or single-agent RL~\cite{Kiely2025CyberDefenseMARL}, HAMARL trains the entire hierarchy and the adversary jointly with a PPO-GAE loop, yielding policies that remain effective under novel, unseen attack mixtures.  Our experiments confirm that this end-to-end adversarial curriculum improves mean-time-to-detect, false-alarm rate, and robustness to unseen zero-day tactics over both flat MARL and non-adversarial hierarchical baselines.

%%%%%%%%%%%%%%%%%%%%%%%%%%%%%%%%%%%%%%%%%%%%%%%%%%%%%%%%%%%%%%%%%%%%%
% MAIN BODY THEORETICAL SECTION
%%%%%%%%%%%%%%%%%%%%%%%%%%%%%%%%%%%%%%%%%%%%%%%%%%%%%%%%%%%%%%%%%%%%%
\section{Theoretical Foundations for HAMARL}

In this section, we formalize the hierarchical multi-agent framework with an explicit adversarial agent. Let there be $N$ defender agents (local) plus one global coordinator, collectively denoted $\{\pi_{\theta_1}, \dots, \pi_{\theta_N}, \pi_{\phi}\}$, and one adversarial attacker $\pi_{\psi}$. The environment is thus modeled as a Markov game (partially observed stochastic game) with $(N+2)$ agents ($N$ defender agents, a global coordinator, and an adaptive attacker).

\begin{definition}[Markov Game with Adversary]
A Markov game (MG) with an adversarial agent is defined by the tuple
\[
\mathcal{G} = \bigl\langle \mathcal{S}, \{\mathcal{A}_i\}_{i=1}^{N+2}, P, \{r_i\}_{i=1}^{N+2}, \gamma \bigr\rangle,
\]
where:
\begin{itemize}
    \item $\mathcal{S}$ is the state space, including subsystem statuses and sensor data.
    \item $\mathcal{A}_i$ is the action space for agent $i \in \{1,\dots,N, N+1, N+2\}$ (where $N+2$ represents the adversarial agent).
    \item $P(\mathbf{s}' \mid \mathbf{s}, \mathbf{a})$ is the transition kernel describing how the environment evolves given state $\mathbf{s}$ and action $\mathbf{a}$.
    \item $r_i(\mathbf{s}, \mathbf{a})$ is the reward function for agent $i$. 
    \begin{itemize}
    		\item Defender agents ($1 \leq i \leq N$): Receive positive rewards for successful detections or patches ($r_i>0$) and negative rewards for false alarms or missed compromises ($r_i<0$).
		\item Attacker agent ($N+2$): Earns positive rewards for successful system compromises ($r_{N+2}>0$).
    \end{itemize}
		%For defenders, $r_i>0$ on successful detection or patching, $r_i<0$ on false alarms or missed compromises. For the attacker, $r_{N+2}>0$ if it successfully compromises systems.
    \item $\gamma \in (0,1)$ is the discount factor that govern how agents value future rewards.
\end{itemize}
\end{definition}

%The local agents each observe partial state $\omega_i \subset \mathbf{s}$, while the global coordinator observes an aggregate $\mathbf{g}$ of local actions or states. The attacker $\pi_{\psi}$ may also have partial knowledge of $\mathbf{s}$.
Each local defender observes a partial state $\omega_i \subset \mathbf{s}$, while the global coordinator maintains an aggregate representation $\mathbf{g}$ of local states or actions. The adversarial agent $\pi_{\psi}$ may also observe only a partial state of the system.

To capture the interaction between local defenders, the global coordinator, and the adversary, we factorize the joint policy as follows:

\begin{proposition}[Factorization of Joint Policy in Hierarchical-Adversarial Setting]
\label{prop:factorization}
%Let $\{\pi_{\theta_i}\}_{i=1}^N$ be local defenders, $\pi_{\phi}$ be the global coordinator, and $\pi_{\psi}$ be the attacker. Then the joint policy over actions $\mathbf{a} = \{a_1,\dots,a_N, a_\text{global}, a_\text{attacker}\}$ can be factorized as:
Let ${\pi_{\theta_i}}{i=1}^N$ be the local defender policies, $\pi{\phi}$ be the global coordinator policy, and $\pi_{\psi}$ be the attacker policy. Then, the joint policy over actions $\mathbf{a} = {a_1,\dots,a_N, a_\text{global}, a_\text{attacker}}$ can be expressed as:

%\[
%\pi_{\Theta,\phi,\psi}(\mathbf{a} \mid \mathbf{s})
%= \Bigl(\prod_{i=1}^N \pi_{\theta_i}(a_i \mid \omega_i)\Bigr)\,\pi_{\phi}(a_\text{global}\mid \mathbf{g})\,\pi_{\psi}(a_\text{attacker}\mid \omega_\text{att}).
%\]
\begin{align*}
\pi_{\Theta,\phi,\psi}(\mathbf{a} \mid \mathbf{s}) &=
\Bigl(\prod_{i=1}^N \pi_{\theta_i}(a_i \mid \omega_i)\Bigr) \, \pi_{\phi}(a_\text{global} \mid \mathbf{g}) \notag \\
&\quad \times \pi_{\psi}(a_\text{attacker} \mid \omega_\text{att}).
\end{align*}
\end{proposition}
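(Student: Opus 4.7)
The plan is to establish the factorization as a direct consequence of the decentralized execution model baked into the HAMARL architecture, combined with the standard chain rule of conditional probability. First I would make explicit the assumption that, conditioned on the global state $\mathbf{s}$, each agent samples its action using only its own policy network and its own observational input: defender $i$ uses $\pi_{\theta_i}$ with $\omega_i$, the coordinator uses $\pi_{\phi}$ with $\mathbf{g}$, and the attacker uses $\pi_{\psi}$ with $\omega_\text{att}$. Under this decentralized execution assumption, the random variables $a_1,\dots,a_N,a_\text{global},a_\text{attacker}$ are mutually conditionally independent given $\mathbf{s}$, because the only shared randomness is funneled through $\mathbf{s}$ itself.

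Next I would write the joint policy via the chain rule in the order (defenders, coordinator, attacker), obtaining a product of terms of the form $\Pr(a_i \mid \mathbf{s}, a_{<i})$. Each such term collapses by two successive reductions: (i) by conditional independence of actions given $\mathbf{s}$, the conditioning on the earlier actions $a_{<i}$ can be dropped; and (ii) because the agent's policy is, by construction, a function only of its local observation, one may replace the conditioning variable $\mathbf{s}$ with the relevant projection ($\omega_i$ for defender $i$, $\mathbf{g}$ for the coordinator, $\omega_\text{att}$ for the attacker), where $\omega_i \subset \mathbf{s}$, $\omega_\text{att}\subset \mathbf{s}$, and $\mathbf{g}$ is a deterministic aggregate of $\mathbf{s}$. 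Multiplying the resulting terms yields exactly the claimed product form.

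The main obstacle is not algebraic but definitional: one must be careful that the partial observations $\omega_i$, $\omega_\text{att}$ and the aggregate $\mathbf{g}$ are measurable functions of $\mathbf{s}$, so that conditioning on them is genuinely a coarsening of conditioning on $\mathbf{s}$, and that no hidden side-channel coupling (e.g., shared noise, communication during execution) is introduced that would invalidate the conditional independence step. I would therefore devote a short remark to making these measurability and no-communication-at-execution-time assumptions explicit, after which the factorization follows immediately. The remainder of the argument is mechanical application of the chain rule.
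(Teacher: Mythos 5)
Your proposal is correct and follows essentially the same route as the paper's own sketch: both derive the factorization from the conditional independence of the agents' actions given their respective partial observations, which is an architectural assumption of decentralized execution rather than a deep fact. Your version is somewhat more careful — making the chain-rule ordering, the measurability of $\omega_i$, $\omega_\text{att}$, and $\mathbf{g}$ as functions of $\mathbf{s}$, and the no-communication-at-execution assumption explicit — but the substance of the argument is the same.
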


\begin{proof}[Sketch Proof of Proposition~\ref{prop:factorization}]
We proceed by explicitly demonstrating that the joint policy decomposes naturally due to the conditional independence of the defender agents, the global coordinator, and the attacker given their respective partial observations.
Consider that each local defender agent $i$ takes actions $a_i$ based solely on its local observations $\omega_i$, independently conditioned on $\omega_i$. Similarly, the global coordinator’s action $a_{\text{global}}$ is conditioned on an aggregated representation $\mathbf{g}$, independently from the specific local actions. Finally, the adversary takes actions based solely on its partial observations $\omega_{\text{att}}$. Hence, the joint probability of the combined actions $\mathbf{a}$ given the global state $\mathbf{s}$ naturally decomposes as follows:
\begin{align*}
 \pi_{\Theta,\phi,\psi}(&\mathbf{a} \mid \mathbf{s}) \\
&= P\left(a_1, \dots, a_N, a_{\text{global}}, a_{\text{attacker}} \mid \mathbf{s}\right) \\
&= P\left(a_{\text{attacker}} \mid \omega_{\text{att}}\right)P\left(a_{\text{global}} \mid \mathbf{g}\right)\prod_{i=1}^N P\left(a_i \mid \omega_i\right),
\end{align*}
due to conditional independence, thereby confirming our factorization.
\end{proof}

%\paragraph{Remark.}
\begin{remark}
%This factorization underpins the multi-agent training procedure, where each agent’s policy is updated via advanced PPO steps, contingent on partial observations.
%This factorization forms the basis for multi-agent training, where each agent updates its policy using Proximal Policy Optimization (PPO) steps, contingent on its partial observability.
The explicit factorization significantly supports scalable training via PPO, allowing local and global policies to be updated independently. This modular structure ensures stable hierarchical control and adaptability in complex adversarial environments.
\end{remark}

\subsection{Generalized Advantage Estimation and PPO}

%Following \cite{gae_paper} and \cite{schulman_ppo}, each agent maintains a parametric policy $\pi_\theta$ with a value function $V_\theta(\mathbf{s})$. The \emph{advantage} of taking action $a$ in state $\mathbf{s}$ is $A_\theta(\mathbf{s},a) = Q_\theta(\mathbf{s},a) - V_\theta(\mathbf{s})$, estimated via a truncated GAE:
Following~\cite{gae_paper, schulman_ppo}, each agent maintains a parametric policy $\pi_\theta$ with an associated value function $V_\theta(\mathbf{s})$. The advantage function, which estimates how favorable an action is compared to the expected value of the state, is defined as:
\[
A_\theta(\mathbf{s},a) = Q_\theta(\mathbf{s},a) - V_\theta(\mathbf{s})
\]

To compute advantage estimates, we use the Generalized Advantage Estimation (GAE) technique:
\[
\hat{A}_t = \sum_{k=0}^{T-t-1} (\gamma\lambda)^k \delta_{t+k}, 
\quad
\delta_t = r_t + \gamma V(\mathbf{s}_{t+1}) - V(\mathbf{s}_t).
\]

\begin{theorem}[Convergence of PPO in Hierarchical-Adversarial MARL]
\label{thm:convergence}
Consider the Markov game $\mathcal{G}$ with $N+2$ agents, each employing PPO updates with GAE. Let $\theta_i,\phi,\psi$ be their respective parameters. If each agent’s policy improves according to the clipped objective in \cite{schulman_ppo} within a bounded trust region, under standard assumptions (bounded rewards, Markov mixing, sufficiently large batch data and exploration), the system converges to a stationary point $(\theta_i^*, \phi^*, \psi^*)$ that constitutes a local Nash equilibrium. Specifically:
\[
\nabla_{\theta_i} \mathcal{L}(\theta_i^*; \theta_{-i}^*, \phi^*, \psi^*) = 0, \]
\[
\nabla_{\phi} \mathcal{L}(\phi^*; \theta^*, \psi^*) = 0,
\] 
\[\nabla_{\psi} \mathcal{L}(\psi^*; \theta^*, \phi^*) = 0.
\]
\end{theorem}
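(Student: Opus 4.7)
The plan is to view the simultaneous PPO-GAE updates as a coupled stochastic approximation on the joint parameter $(\theta_1,\dots,\theta_N,\phi,\psi)$ and then apply an ODE-limit argument. First, I would invoke Proposition~\ref{prop:factorization}: because the joint policy factorizes across agents given their observations, the policy gradient of each clipped surrogate $\mathcal{L}_i$ depends only on that agent's own parameters, with the remaining agents absorbed into a (non-stationary) environment. This lets me write each update in the canonical Robbins--Monro form
\[
\xi_i^{t+1} = \xi_i^t + \eta_t \bigl(h_i(\xi^t) + M_i^{t+1}\bigr),
\]
where $\xi_i$ ranges over $\theta_i,\phi,\psi$, $h_i$ is the expected gradient of the clipped surrogate, and $M_i^{t+1}$ is a martingale-difference noise term arising from sampling and bootstrapping.

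Next I would verify the hypotheses of the standard ODE method. Bounded rewards together with Markov mixing yield bounded variance and bounded expectation for the GAE estimator $\hat A_t$; combined with the clipped importance ratio this gives uniform boundedness and Lipschitz continuity of every $h_i$. The PPO trust region forces the per-step KL drift to be $O(\eta_t)$, so the coupled iterates track the continuous-time flow $\dot\xi_i = h_i(\xi)$ almost surely. I would then lift the classical monotonic-improvement lemma for PPO to each agent block, obtaining a local Lyapunov certificate: with peers held essentially fixed, $\mathcal{L}_i$ is non-decreasing up to an $O(\epsilon^2)$ clipping remainder. Combining these per-agent Lyapunov functions by a block-coordinate argument would show that every limit point of $(\xi^t)$ is an equilibrium of the coupled ODE, i.e.\ satisfies the three stationarity conditions in the statement. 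The \emph{local Nash} claim follows because inside the trust region each agent's best response is precisely a stationary point of its own clipped surrogate, so no unilateral deviation of the permitted size can strictly improve $\mathcal{L}_i$.

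The hard part will be the attacker--defender coupling. Unlike the purely cooperative block of defenders plus coordinator, the parameter $\psi$ enters the joint surrogate with opposite sign, producing a nonconvex--nonconcave min--max flow that generically cycles or diverges without additional structure. I would control this with two levers already present in the hypotheses: (i) the clipping / trust region, which bounds adversarial drift and renders the joint update contractive in a neighborhood of any stationary point; and (ii) a two-timescale separation $\eta_t^{\psi} = o(\eta_t^{\theta})$, so that on the slow attacker timescale the defender block sees an essentially stationary adversary and the coupled ODE admits a well-defined slow manifold, along which Borkar's two-timescale theorem yields convergence. Settling for a \emph{local} rather than global Nash equilibrium is the price paid for avoiding a full nonconvex--nonconcave analysis and is consistent with the trust-region character of PPO. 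Existence of at least one such equilibrium on the compact parameter set would finally be obtained from a Brouwer fixed-point argument applied to the clipped best-response correspondence.
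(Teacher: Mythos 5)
Your proposal follows the same overall skeleton as the paper's proof: cast each agent's clipped PPO update as a Robbins--Monro stochastic-approximation iteration, use bounded rewards, bounded gradients, sustained exploration, and diminishing step sizes to conclude convergence to a joint stationary point, and then read off the stationarity conditions as a local Nash equilibrium. Where you genuinely diverge is in the treatment of the attacker--defender coupling. The paper's proof simply folds the adversary into the generic multi-agent best-response argument and asserts convergence ``under standard conditions,'' never acknowledging that the sign-flipped objective for $\psi$ turns the joint dynamics into a nonconvex--nonconcave min--max flow that can cycle. You isolate this as the hard part and resolve it with a two-timescale separation $\eta_t^{\psi}=o(\eta_t^{\theta})$ plus Borkar's theorem, and you add a fixed-point argument for existence of an equilibrium; neither appears in the paper. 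This buys you a proof that actually addresses the adversarial nonstationarity, at the cost of importing a hypothesis (timescale separation) that is not in the theorem statement --- you should say explicitly that you are strengthening the assumptions, since without it the claimed convergence is not known to hold for simultaneous play. Two soft spots in your own argument: the assertion that clipping makes the joint update ``contractive in a neighborhood of any stationary point'' does not follow from the trust region alone and would need a separate local curvature or monotonicity condition; and the Brouwer argument should be Kakutani applied to a best-response correspondence, which still only yields existence of a fixed point of the \emph{clipped} responses, not of a local Nash equilibrium of the underlying game. With those caveats flagged, your route is the more defensible of the two.
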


%\begin{proof}[Sketch Proof of Theorem~\ref{thm:convergence}]
%Each agent’s PPO update can be viewed as a stochastic gradient ascent step on the clipped surrogate objective, which ensures per-update monotonic improvement within a specified ratio bound. The hierarchical nature of our approach does not disrupt the fundamental convergence properties of PPO-based MARL. The global coordinator, despite aggregating decentralized agent policies, does not interfere with each agent’s independent policy update but rather enforces a structured decision-making pipeline. Consequently, policy updates remain bounded within trust regions, preserving theoretical convergence guarantees. Because all agents share the environment, the joint updates track a multi-agent gradient, ensuring stability in learning.
%
%By standard arguments for actor-critic methods in Markov games \cite{zhang_multiagentRL}, if the reward and advantage estimates remain bounded and each agent explores sufficiently, then with diminishing step sizes the parameters converge to a stationary point. This point is a local Nash equilibrium: no single agent can unilaterally improve its objective without changing other agents’ policies. 
%A formal multi-agent version of the proof is in Appendix~\ref{appendix:multiagent_proof}.
%\end{proof}
\begin{proof}[Sketch Proof of Theorem~\ref{thm:convergence}]
Each agent’s PPO update constitutes a stochastic gradient ascent step on a clipped surrogate objective, explicitly ensuring monotonic improvement within a bounded trust region. The hierarchical design explicitly preserves the fundamental convergence properties of PPO-based MARL since the global coordinator aggregates but does not disrupt individual policy improvements. Specifically, local defenders independently optimize subsystem-level objectives, and the global coordinator optimizes a system-wide objective, both respecting PPO’s trust-region constraints. Thus, joint parameter updates effectively track multi-agent gradient ascent in policy space, converging to stationary points under standard conditions: bounded gradients, finite rewards, sufficient exploration, and diminishing learning rates.

Formally, as each agent explores sufficiently and accumulates representative experience in GAE buffers, policy gradients become accurate unbiased estimators of the true gradient of expected returns. Given diminishing step sizes, stochastic approximation theory ensures parameter updates converge to stationary points $(\theta_i^*, \phi^*, \psi^*)$.% satisfying:
%\(
%\nabla_{\theta_i} \mathcal{L}(\theta_i^*; \theta_{-i}^*, \phi^*, \psi^*) = 0,\quad
%\nabla_{\phi} \mathcal{L}(\phi^*; \theta^*, \psi^*) = 0,\quad
%\nabla_{\psi} \mathcal{L}(\psi^*; \theta^*, \phi^*) = 0. 
%\)
%
This stationary point explicitly represents a local Nash equilibrium, as no agent can unilaterally increase its return by altering its policy independently of others. Thus, convergence of PPO within the hierarchical-adversarial MARL framework is ensured under these standard and explicitly stated assumptions.
\end{proof}

\subsection{Adversarial Resilience in Hierarchical Control}

%\begin{definition}[Adversarial Resilience]
%The defender policies $\{\pi_{\theta_i}, \pi_{\phi}\}$ achieve \emph{adversarial resilience} if under the presence of an adaptive attacker $\pi_\psi$ aiming to maximize compromise time or frequency, the equilibrium policy of defenders yields a bounded compromise ratio $\varrho < 1$ (i.e., not all subsystems remain compromised indefinitely) and the expected reward for defenders remains strictly above a baseline no-defense scenario.
%\end{definition}
\begin{definition}[Adversarial Resilience]\label{def:adv-res}
Let $\Comp(t)$ be the set of subsystems compromised at time $t$.  
\begin{itemize}
  \item Compromise time $\tau_i$ of subsystem $i$ is the number of consecutive steps for which $i\in\Comp(t)$ until it is restored, formally  
        \[\tau_i=\min\{k>0\mid i\notin\Comp(t{+}k)\}.\]
  \item Compromise frequency of subsystem $i$ over horizon $T$ is  
        \[f_i=\frac{1}{T}\sum_{t=1}^{T}\mathbf1[i\in\Comp(t)].\] 
  \item Bounded compromise ratio is  
        \[\varrho=\frac1N\sum_{i=1}^{N}f_i\,,\qquad 0\le\varrho\le1\].
\end{itemize}
A defender policy set $\{\pi_{\theta_1},\dots,\pi_{\theta_N},\pi_\phi\}$ is \emph{$(\epsilon,\delta)$-resilient} if  
\[
\Pr\bigl[\varrho\le\epsilon\bigr]\ge1-\delta
\]
for any attacker policy~$\pi_\psi$ admissible under the game dynamics.
\end{definition}

Intuitively, adversarial resilience means that despite an attacker that learns or changes tactics, the hierarchical defenders maintain partial observability, coordinate responses, and keep compromise in check over time.

\begin{theorem}[Bounded Compromise in Equilibrium]
\label{thm:bounded_compromise}
Let $\pi_{\theta_i}^*, \pi_{\phi}^*$, and $\pi_{\psi}^*$ be the equilibrium policies from Theorem~\ref{thm:convergence}. Suppose the environment imposes a cost $c > 0$ on each compromised subsystem per time step for defenders and a reward $r_a > 0$ for each compromised subsystem for the attacker. If $c$ is sufficiently large relative to $r_a$, then the compromise ratio $\varrho^*$ in the long-run equilibrium is strictly less than 1. Formally:
\[
\varrho^* = \lim_{T\to\infty} \frac{1}{T}\sum_{t=1}^{T}\frac{\sum_{i=1}^N \mathbf{1}\{\text{subsystem $i$ at time $t$}\}}{N}
< 1.
\]
\end{theorem}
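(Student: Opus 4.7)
The plan is to argue by contradiction, leveraging the Nash equilibrium characterization from Theorem~\ref{thm:convergence}. Assume, for the sake of contradiction, that in the long run $\varrho^*=1$, so that almost every subsystem is compromised at almost every step under the stationary distribution induced by $(\pi_{\theta_1}^*,\dots,\pi_{\theta_N}^*,\pi_\phi^*,\pi_\psi^*)$. I would then exhibit a unilateral deviation available to the defender coalition whose expected discounted return strictly exceeds that of the equilibrium, contradicting the stationarity condition $\nabla_{\theta_i}\mathcal{L}(\theta_i^*;\theta_{-i}^*,\phi^*,\psi^*)=0$.

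To construct the deviation, I would first bound the equilibrium return from above: under the contradiction hypothesis, the aggregate per-step defender cost converges to $-cN$, giving expected discounted return at most $-cN/(1-\gamma)$ up to $O(r_a)$ bookkeeping. Next, I would define a simple reactive deviation $\tilde\pi_{\theta_i}$ that, whenever the local observation $\omega_i$ signals compromise, executes a bounded sequence of recovery/patch actions of length at most $H$. Under a reachability condition — that this sequence restores subsystem $i$ with probability at least $p>0$ irrespective of the attacker's concurrent action — the stationary compromise frequency drops by a strictly positive amount $\Delta(p,H)$. The operational penalty (false-alarm cost, throughput loss) incurred by the deviation is bounded by a constant $L$ depending only on the reward specification and not on $c$. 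Choosing $c$ large enough that $c\,\Delta(p,H)>L$ renders the deviation strictly improving, contradicting the equilibrium property and yielding $\varrho^*<1$.

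The step I expect to be hardest is justifying the reachability constant $\Delta(p,H)>0$. This is a statement about the Markov game dynamics rather than about PPO convergence per se: one must rule out the degenerate scenario in which the attacker possesses a dominating policy that keeps every subsystem permanently compromised regardless of defender action. I would address this by invoking the ergodicity assumptions already implicit in Theorem~\ref{thm:convergence} (bounded rewards, irreducible and aperiodic induced chain, sufficient exploration), combined with the existence of at least one patch/restore action in each local action space $\mathcal{A}_i$. Once reachability is secured, the threshold $c>L/\Delta(p,H)$ makes precise the informal phrase ``$c$ sufficiently large relative to $r_a$'' in the statement, since $L$ absorbs the attacker's strongest countermove and therefore scales with $r_a$, and the contradiction argument closes.
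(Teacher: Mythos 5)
Your route differs from the paper's. The paper does not argue by contradiction: its appendix proof balances the equilibrium payoffs directly, asserting an equilibrium condition $r_a\varrho^* N - c\varrho^* N \le 0$ from which it reads off $\varrho^*\le r_a/c < 1$ when $c>r_a$, and then gestures at a potential-function/Markov-chain argument for formal support. Your deviation-based contradiction is in some ways the more natural game-theoretic strategy, and you correctly isolate the real mathematical content --- the reachability constant $\Delta(p,H)>0$, i.e.\ that a patch/quarantine action restores a compromised subsystem with probability bounded away from zero regardless of the attacker's concurrent move. The paper relies on exactly the same unstated dynamics assumption (``if the defenders' policies can patch or quarantine effectively\ldots''), so you are not worse off there, and your explicit threshold $c>L/\Delta(p,H)$ gives sharper meaning to ``$c$ sufficiently large'' than the paper's $c>r_a$.

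There are, however, two genuine gaps. First, your contradiction does not close against what Theorem~\ref{thm:convergence} actually provides. That theorem yields only a first-order stationary point, i.e.\ a \emph{local} Nash equilibrium: $\nabla_{\theta_i}\mathcal{L}(\theta_i^*;\cdot)=0$ and $L_i(\theta_i^*,\theta_{-i}^*)\ge L_i(\theta_i,\theta_{-i}^*)$ only for $\theta_i$ in a neighborhood of $\theta_i^*$. Your reactive policy $\tilde\pi_{\theta_i}$ is a structurally different policy, presumably far from $\theta_i^*$ in parameter space; exhibiting a distant profitable deviation contradicts neither stationarity of the gradient nor local optimality. To repair this you must either strengthen the hypothesis to a global best-response (mutual best-response) equilibrium, or show that the improvement direction toward $\tilde\pi_{\theta_i}$ already has strictly positive directional derivative at $\theta_i^*$, which you do not argue. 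Second, you speak of ``a unilateral deviation available to the defender coalition'': a joint deviation by all $N$ defenders (plus possibly the coordinator) is a coalitional deviation, which no Nash-type condition rules out. This is fixable --- since each local defender's reward is tied to its own subsystem, a single defender $i$ deviating alone should already gain roughly $c\,\Delta(p,H)-L$ under your hypothesis $\varrho^*=1$ --- but the argument must be restated for one agent at a time. A smaller mismatch worth flagging for both your proof and the paper's: the equilibrium is defined via discounted returns while $\varrho^*$ is an undiscounted long-run average, and neither argument bridges that gap explicitly.
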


\begin{proof}[Sketch Proof of Theorem~\ref{thm:bounded_compromise}]
%See Appendix~\ref{appendix:bounded_compromise_proof} for the full derivation. 
Informally, the attacker’s marginal gain from compromising an additional subsystem must be weighed against defenders’ marginal cost for letting it remain compromised. If the defenders’ policies can patch or quarantine effectively, the attacker cannot systematically keep all $N$ subsystems compromised without incurring large negative feedback (through the defenders’ best response strategies). 
%Thus, $\varrho^*<1$ in equilibrium unless the attacker reward $r_a$ dwarfs the defenders’ ability to penalize or detect. 
Thus, $\varrho^* < 1$ in equilibrium unless the attacker reward $r_a$ dwarfs the defenders’ ability to penalize or detect. This result suggests that even in the presence of highly adaptive attackers, the system maintains a level of resilience where at least a fraction of subsystems remains uncompromised. This aligns with real-world security requirements, where maintaining full protection is impractical, but ensuring partial containment prevents widespread failures. By balancing proactive detection and strategic intervention, the hierarchical framework ensures that no single adversary strategy can indefinitely degrade the entire system.
\end{proof}

\begin{remark} 
The synergy between local defenders (rapid quarantines) and a global coordinator (system patches) exemplifies hierarchical synergy. Even if local defenders occasionally miss an attack, the global coordinator can handle system-wide anomalies, ensuring no single attacker strategy can indefinitely compromise all subsystems.  
%The hierarchical structure ensures robust system-wide resilience, as local defenders handle immediate threats, while the global coordinator prevents systemic failures by implementing network-wide countermeasures.
\end{remark}

%% Proposed Methodology Section
\section{Proposed Methodology} \label{sec:methodology}

Securing modern CPS requires intricate reasoning at two interconnected spatial scales: real-time threat detection and response at the subsystem level, and strategic, system-wide coordination against advanced adversaries. Our proposed HAMARL framework addresses this dual-scale security challenge through a hierarchical, partially observable adversarial multi-agent environment.

Specifically, HAMARL incorporates three interconnected roles (Figure~\ref{fig:hamarl-arch}): (1) \emph{local defender agents} monitoring and protecting individual CPS subsystems; (2) a \emph{global coordinator} aggregating local agent states to orchestrate overarching security measures aligned with global operational objectives; and (3) an \emph{adaptive attacker agent} persistently probing CPS vulnerabilities via methods such as targeted scans, denial-of-service (DoS) attacks, lateral movements, and data tampering. The structured information flow involves local defenders generating state embeddings and forwarding them to the global coordinator, which subsequently issues strategic commands informing both local and global defensive actions.

We specifically implement HAMARL in a smart-factory context to concretely demonstrate its effectiveness. The adaptive attacker continually challenges defenders, compelling both local and global defender agents to iteratively refine their defensive policies. Training employs a generalized advantage estimation (GAE) buffer combined with proximal policy optimization (PPO) updates (parameters set as $\gamma=0.99$, $\lambda=0.95$, clip $=0.2$). This competitive-cooperative training loop ensures continuous adaptation to evolving threats, significantly enhancing the system's resilience against intelligent adversarial strategies.

The following subsections elaborate on the formalization of the hierarchical multi-agent architecture, detail the adversarial training methodology, discuss reward shaping and policy optimization strategies, and summarize implementation specifics for reproducibility.

% ────────────────────────────────────────────────────────────────────────────
\begin{figure*}
\centering
%\resizebox{.7\linewidth}{!}{
\begin{tikzpicture}[
  node distance = 10mm and 14mm,
  block/.style  ={rectangle, rounded corners=3pt, draw=black!70, thick,
                  minimum height=7mm, inner sep=4pt, align=center},
  local/.style  ={block, fill=cyan!5},
  global/.style ={block, fill=violet!5},
  attack/.style ={block, fill=red!5},
  env/.style    ={block, fill=gray!5, minimum width=47mm, inner sep=6pt},
  arrow/.style  ={->},% >=Stealth},
  arrowb/.style  ={<->}, %>=Stealth},
  dashedarrow/.style={arrow, dashed}
]

% ───── Smart-Factory Plant ─────
\node[env] (plant) {\textbf{CPS Environment} (Smart Factory Example) \\[1pt]%\scriptsize
Subsystems • Sensors • Network protocols};
%\textbf{Smart-Factory Plant}\\[1pt]%\scriptsize
%                   8 PLC cells • 64 sensors • Modbus/TCP};

% ───── Eight defenders in 2 columns (IDs d1 … d8) ─────
\foreach \idx/\row/\col in {
  1/0/0, 2/1/0, 3/2/0, 4/3/0,
  5/0/2, 6/1/2, 7/2/2, 8/3/2}
  \node[local, anchor=north west, minimum width=10mm] (d\idx)
      at ($(plant.south west)+(28mm*\col,-5mm-9mm*\row)$)
      {{\it Def.}~\idx};

% ───── Coordinator & Attacker ─────
\node[global, right=22mm of plant.south east, yshift=-1mm] (att)
      {\textbf{Adaptive Attacker} $\pi_{\psi}$};

\node[attack, below=26mm of att] (coord)
      {\textbf{Global Coordinator} $\pi_{\phi}$};

% ───── Defender embeddings → Coordinator ─────
\foreach \i in {1,...,4}
  \draw[arrow] (d\i.west) -- ++(-7mm,0) |- (0,-4.7) -| (4,-4.7) -| (coord.south);

\foreach \i in {5,...,8}
  \draw[arrow] (d\i.east) -- ++(7mm,0) |- (0,-4.7) -| (4,-4.7) -| (coord.south);

% ───── Coordinator plant-wide action → Plant ─────
\draw[arrow] (coord.west) -- ++(-7mm,0mm) |- node[above,pos=.2,rotate=-90, font=\tiny]{plant-wide act.} (plant.south east);
% ───── Local obs & actions (double-headed) ─────
\foreach \i in {1,...,4}{
  \draw[arrowb] (d\i) [bend right] edge (plant);                          % observation
%  \draw[arrow] (plant.south) -- ($(d\i)+(0,0mm)$.east);           % local action
}
\foreach \i in {5,...,8}{
  \draw[arrowb] (d\i) [bend left] edge (plant);                          % observation
%  \draw[arrow] (plant.south) -- ($(d\i)+(0,0mm)$.west);           % local action
}

% ───── Attacker interactions ─────
\draw[dashedarrow] (att.north west) |- node[above,pos=.35, xshift=-10mm, font=\tiny]{scan, DoS, tamper}
      ($(plant.east)+(0mm,0mm)$);

\draw[dashedarrow] ($(plant.north east)+(0mm,0mm)$) -| node[below,xshift=5mm, pos=0.35, font=\tiny]{obs, reward} (att.north);

% ───── Training inset ─────
\node[block, fill=green!5, anchor=south east, yshift=-0mm] (train)
      at ($(att.south east)+(3,-20mm)$)
      {\textbf{Training Loop}\\[2pt]\tiny
       GAE buffer $\rightarrow$ PPO update\\
       $\gamma=0.99,\;\lambda=0.95,\;$clip$=0.2$};

\draw[arrow] (train.north) -- ++(0mm,0) |- (att.east);
\draw[arrow] (train.south) -- ++(0mm,0) |- (coord.east);

% ───── Legend ─────
%\matrix[matrix of nodes,
%        left=44mm of plant.south,
%        column sep=3mm,
%        nodes={font=\tiny, inner sep=2pt}]{
%  \node[local ,minimum width=6mm]{};& defender\\
%  \node[global,minimum width=6mm]{};& coordinator\\
%  \node[attack,minimum width=6mm]{};& attacker\\
%};

\end{tikzpicture}%}
\caption{%HAMARL architecture: Eight local \textcolor{cyan!60!black}{\emph{defenders}} (blue) observe their PLC cells and send embeddings (solid arrows) to a \textcolor{Orchid}{\emph{global coordinator}} (violet), which issues plant-wide control signals to the smart-factory plant. A \textcolor{Salmon}{\emph{adaptive attacker}} (red) injects \emph{scan}, \emph{DoS}, \emph{lateral-movement} and \emph{tamper} actions (dashed arrows) and receives observations and rewards in return. The \textcolor{green!50!black}{\emph{training loop}} (green inset) stores trajectories in a GAE buffer and updates all policies with PPO ($\gamma\!=\!0.99,\;\lambda\!=\!0.95$, clip$=0.2$).
HAMARL architecture: Local \textcolor{cyan!60!black}{\emph{defender agents}} (blue) observe subsystem states (illustrated here as PLC cells in a smart-factory context) and send state embeddings (solid arrows) to a \textcolor{Orchid}{\emph{global coordinator}} (violet), which issues system-wide control signals. An \textcolor{Salmon}{\emph{adaptive attacker}} (red) persistently injects adversarial actions, including \emph{scanning}, \emph{denial-of-service (DoS)}, \emph{lateral movement}, and \emph{tampering} (dashed arrows), receiving observations and rewards. The \textcolor{green!50!black}{\emph{training loop}} (green inset) uses a GAE buffer to store experiences and updates all agent policies using PPO ($\gamma!=!0.99,;\lambda!=!0.95$, clip$=0.2$).
%HAMARL architecture: Local \textcolor{cyan!60!black}{\emph{defender agents}} (blue) observe subsystems and send embeddings (solid arrows) to a \textcolor{Orchid}{\emph{global coordinator}} (violet) issuing control signals. An \textcolor{Salmon}{\emph{adaptive attacker}} (red) injects adversarial actions (dashed arrows). The \textcolor{green!50!black}{\emph{training loop}} (green inset) updates all policies via PPO (GAE buffer, $\gamma=0.99$, $\lambda=0.95$, clip=0.2).
}
\label{fig:hamarl-arch}
\end{figure*}
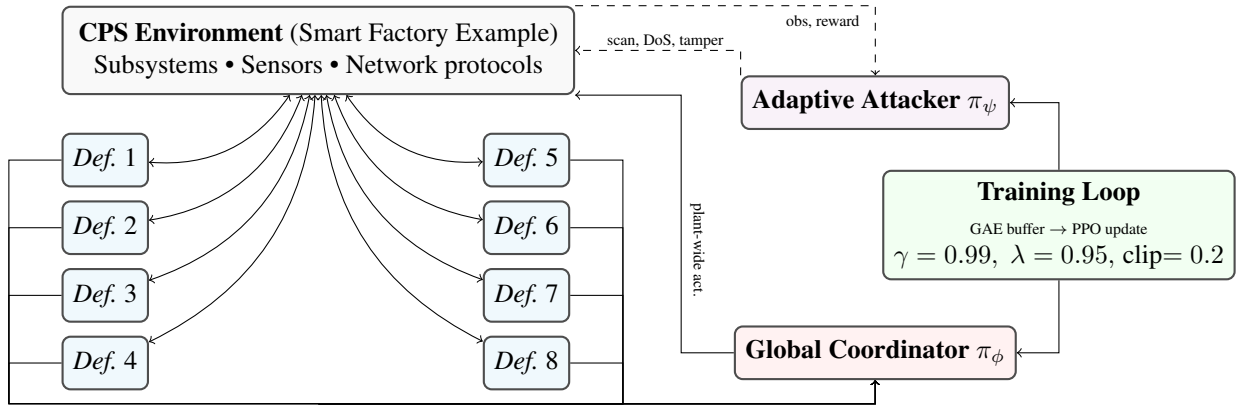

\subsection{Hierarchical Multi-Agent Architecture} In our framework, defender agents are organized hierarchically to mirror real-world organizational structures in industrial or IoT environments. Local agents each monitor specific subsystems or network segments, processing local sensor data and triggering immediate responses (e.g., blocking suspicious traffic). A global coordinator receives summarized state information from all local agents, resolves conflicting actions, and implements system-wide defensive measures such as network isolation or forced restarts of compromised nodes.

%Figure~\ref{fig:hierarchical_architecture} illustrates this layered control structure. 
At the bottom tier, local agents operate on partial observations of their assigned subsystem, allowing them to perform lightweight, real-time anomaly detection. At the top tier, the global coordinator has access to high-level aggregated information, enabling network-wide interventions (e.g., micro-segmentation or mass patch deployment). This design is especially beneficial in large-scale systems where fully centralized control becomes computationally infeasible~\cite{kulkarni2016hierarchical}, since it leverages local autonomy to reduce communication overhead and accelerate response.

Conceptually, the hierarchical arrangement allows each local agent to specialize in detecting and handling threats within its domain, leading to faster and more accurate detection at the subsystem level. Meanwhile, the global coordinator maintains a holistic view of the entire CPS, enabling better resource allocation and higher-level decision-making. As a result, the local and global layers collectively mitigate attacks more effectively than monolithic or purely decentralized defenses.

\subsection{Adversarially-Aware Training} A novel aspect of our method is the adversarial training loop, wherein a simulated \emph{attacker agent} with an evolving policy is introduced. Unlike static or random threats, this attacker adapts its strategies over time, attempting to compromise the system by exploiting vulnerabilities, launching denial-of-service attacks, or tampering with sensor data to degrade process quality. This adversary is trained \emph{in tandem} with the defender agents, continually refining its attack strategies based on defender actions. Conversely, defenders learn robust behaviors to counter more sophisticated threat patterns. By framing the interaction as a repeated, partially observable stochastic game, both attackers and defenders iteratively improve their policies~\cite{shapley1953games,tambe2011game}.

%Figure~\ref{fig:adversarial_loop} depicts this adversarial loop. 
The attacker receives feedback about how many subsystems it successfully compromises or how often it remains undetected; the defender side (local + global) receives negative rewards for letting a subsystem remain compromised and positive rewards for correct detection and rapid patching. Over multiple episodes, these opposing objectives shape a minimax-style equilibrium, leading to \emph{adversarial resilience}: the system must remain vigilant against an intelligent attacker that changes tactics over time.

\subsection{Reward Structures and Policy Optimization} The learning process relies on a hybrid reward function that captures both local and global objectives. At the local level, each agent is rewarded for correctly identifying or neutralizing threats and penalized for false alarms that interrupt legitimate operations. At the global level, the system receives rewards for maintaining uninterrupted operation, minimizing resource overhead, and preserving overall safety. We adopt a hierarchical multi-critic approach, where the local critics evaluate immediate detection performance, and a global critic focuses on system-wide metrics~\cite{lowe2017marl,yang2018meanfield}.

For policy optimization, our implementation utilizes an extension of Proximal Policy Optimization (PPO) adapted for multi-agent environments~\cite{schulman_ppo}. Each local agent’s policy is represented by a neural network, potentially a graph neural network (GNN) or a transformer-based model for enhanced processing of heterogeneous sensor data~\cite{velickovic2018gat}. The global coordinator leverages aggregated embeddings from local agents, employing a separate neural network to learn the optimal coordination policy. By periodically synchronizing policy updates in a batch or round-robin fashion, the agents learn joint strategies that balance local autonomy with global oversight.

\subsection{Extensions and Implementation Improvements} Beyond the core hierarchy and adversarial loop, our methodology incorporates additional practical considerations:

\begin{itemize} 
\item Partial Observability and Scalable Communication: Local agents operate with partial observability, restricting their access to only subsystem-level data. This design minimizes communication overhead while preserving scalability.
 %Each local agent observes only its subsystem data, reducing communication overhead. 
Aggregated messages to the global coordinator are compressed to limit bandwidth usage. 
\item Formal Safety Checks: Certain high-risk actions (e.g., quarantining all subsystems) trigger domain-specific safety checks to prevent catastrophic decisions, mirroring real ICS safety protocols.
 \item Transferability and Generalization: The learned policies can potentially transfer to other CPS domains (e.g., smart grid, autonomous vehicles) if sensor features and reward design are adapted accordingly. 
 \end{itemize}

These extensions position the hierarchical adversarially-resilient MARL framework as a flexible, real-world ready solution to emerging security threats in interconnected industrial environments.

\section{Implementation and Experiment Design} \label{sec:implementation}

%\subsection{Testbed Overview}
%To evaluate HAMARL, we utilized the Cyber-Battle-Sim toolkit to construct a realistic simulated industrial IoT environment that replicates a small-scale smart factory. This testbed comprises 8 programmable logic controller (PLC)-driven subsystems, 64 diverse sensors—including temperature, vibration, and flow sensors—and employs standard communication protocols like Modbus/TCP. Each subsystem is protected by a dedicated local defender agent, while a global coordinator agent manages overarching security strategies, enabling both localized responsiveness and global threat mitigation.

%\subsection{Attack Scenarios}
%We explicitly simulate realistic and representative cyber threats commonly faced by CPS environments, including:
%\begin{itemize}
%	\item Denial-of-Service (DoS) Attacks: Flooding the control network to degrade system responsiveness and real-time operations.
%	\item Data Tampering: Manipulating sensor data to induce erroneous actuator responses, potentially leading to physical disruptions.
%	\item Advanced Persistent Threats (APTs): Executing stealthy infiltration attempts aimed at gathering sensitive operational data or embedding malicious control scripts.
%\end{itemize}
%The adversarial agent dynamically evolves its attack strategies during training, creating a continually adapting threat landscape that challenges defenders to proactively and effectively counter advanced attacks.

\subsection{Testbed Overview}
The experimental testbed for HAMARL was explicitly designed using the Cyber-Battle-Sim toolkit, meticulously enhanced to emulate a realistic industrial IoT environment that accurately replicates the characteristics of a small-scale smart factory. This carefully constructed simulation comprises eight distinct PLC-driven subsystems, each representing critical components typically found within manufacturing processes. These subsystems are integrated with a diverse array of 64 specialized sensors, including but not limited to temperature, vibration, and flow sensors, to provide a thorough representation of operational conditions and facilitate precise monitoring and anomaly detection.

The system utilizes widely adopted industrial communication protocols, specifically Modbus/TCP, to closely mimic real-world industrial communications. Each subsystem is autonomously managed by dedicated local defender agents designed to perform rapid threat detection and immediate response actions, such as quarantines and patch deployments. To complement these localized defenses, a global coordinator agent oversees the entire network, aggregating insights from the local agents and executing strategic security measures aimed at comprehensive threat mitigation. This hierarchical approach enables quick local responses and strategic global actions, balancing responsiveness and overarching situational awareness.

\subsection{Attack Scenarios}
To robustly evaluate the HAMARL framework, we explicitly designed and simulated several realistic and representative cyber-attack scenarios commonly encountered in cyber-physical systems (CPS):
\begin{itemize}
    \item Denial-of-Service (DoS) Attacks: These attacks specifically target the control network, overwhelming communication channels and computational resources, severely degrading responsiveness and impacting critical real-time operational decision-making.
    \item Data Tampering: In these scenarios, attackers explicitly manipulate sensor data streams to induce incorrect actuator responses, which could lead to significant physical disruptions, compromised product quality, or even safety incidents.
    \item Advanced Persistent Threats (APTs): This category represents sophisticated, stealthy attacks where adversaries aim to gradually infiltrate and remain undetected within the network, extracting sensitive operational data or installing covert malicious control scripts to manipulate industrial processes over extended periods.
\end{itemize}

Crucially, the adversarial agent in our simulation dynamically evolves its strategies through adversarial reinforcement learning. By continuously refining attack methods to circumvent defender responses, this adversarial agent provides a challenging and adaptive threat landscape, requiring defenders to consistently update and improve their defensive policies. This adversarial training paradigm ensures that defender agents not only learn to respond to predefined threats but also gain resilience and adaptability against novel, emerging attack strategies.

\subsection{Implementation Steps}

\subsubsection{Environment Initialization}
Given the limited availability of real-world CPS datasets, our simulation employs meticulously crafted synthetic datasets designed to reflect realistic operational conditions. These datasets simulate normal industrial processes, detailed sensor outputs, and typical network interactions inherent to industrial IoT environments. By modeling realistic system dynamics and potential vulnerabilities, these datasets provide a robust experimental foundation for thoroughly evaluating and validating HAMARL's effectiveness in accurately representing CPS operational behaviors. Network simulation is additionally augmented through realistic packet-level interaction modeling, ensuring fidelity in representing network-based cyber-attack scenarios and defensive measures. The simulation environment was implemented using an extended version of the Cyber-Battle-Sim toolkit, customized to emulate an industrial IoT environment explicitly featuring PLC-driven subsystems and Modbus/TCP communication.

\subsubsection{Local Agent Deployment}
Local defender agents operate autonomously, each responsible for securing a designated subsystem. They continuously receive and analyze partial observations, including sensor readings and localized network traffic statistics. These agents swiftly identify and respond to anomalies indicating potential threats. Upon detection, they promptly execute defensive measures such as issuing intrusion alerts, performing subsystem quarantines, and initiating targeted patch deployments. This decentralized strategy ensures immediate threat mitigation, significantly limiting disruption and damage at the subsystem level. Agents utilize lightweight anomaly detection algorithms suitable for real-time inference, ensuring rapid, resource-efficient responses even in computationally constrained environments. Each local agent's policy is represented using a two-layer Graph Attention Network (GAT) with hidden size 32 and four attention heads.

\subsubsection{Global Coordination}
The global coordinator aggregates subsystem-level embeddings received from local defenders to facilitate comprehensive threat assessment and strategic decision-making. This hierarchical coordination enables the global coordinator to execute advanced, centralized security interventions, including network segmentation, coordinated patch rollouts, and compromised node resets. By strategically balancing autonomous local responses with centralized oversight, this hierarchical structure ensures coherent and effective system-wide defense, enhancing overall resilience against complex cyber threats. Communication between local agents and the global coordinator employs embedding compression techniques to minimize network overhead, facilitating efficient real-time hierarchical coordination. The coordinator policy is implemented as a three-layer Multi-Layer Perceptron (MLP) with layers of sizes 64, 32, and 16.

\subsubsection{Adversarial Training Loop}
Our framework incorporates a sophisticated adversarial training loop, employing an adaptive attacker agent that iteratively enhances its offensive capabilities through reinforcement learning, specifically leveraging the Proximal Policy Optimization (PPO) algorithm. This dynamic attacker continuously evolves its strategies, exposing defenders to increasingly sophisticated and diverse cyber threats. The iterative adversarial training environment compels defenders to continuously refine and adapt their defensive tactics, thus significantly improving their robustness, detection precision, and response agility against evolving threats. The attacker agent explicitly targets vulnerabilities in the defenders’ policy spaces, reinforcing defenders' preparedness for realistic threat dynamics.

\subsubsection{Reward Engineering}
Reward signals are carefully structured to effectively guide agent learning towards desirable behaviors. Local defenders receive rewards explicitly designed to incentivize accurate threat detection and discourage false positives or missed threats:
\[
r_i = \begin{cases}
+1 & \text{True Positive},\\
-0.2 & \text{False Positive},\\
-1 & \text{Miss}
\end{cases}
\]

The global coordinator's reward emphasizes sustained operational continuity, penalizing system-wide disruptions while incentivizing the maintenance of system uptime:
\[
R = -0.1|\text{Comp}(t)| - 0.01\,\textsc{downtime} + 0.2\,\textsc{uptime}
\]

The attacker agent is rewarded explicitly for successful compromises that evade detection, thus encouraging stealth and strategic sophistication in its attack strategies. This comprehensive reward framework ensures a challenging, realistic adversarial environment conducive to robust defender policy development.

\subsubsection{Evaluation}
The evaluation framework systematically assesses essential security performance metrics to rigorously quantify the effectiveness of the HAMARL system. Metrics include detection latency, false alarm rates, precision, recall, Mean Time To Detection (MTTD), and overall accuracy, computed explicitly as follows:
\[
\text{Precision} = \frac{\text{True Positives}}{\text{True Positives} + \text{False Positives}},\]\[
\text{Recall} = \frac{\text{True Positives}}{\text{True Positives} + \text{False Negatives}}
\]
\[
\text{Accuracy} = \frac{\text{True Positives} + \text{True Negatives}}{\text{Total Cases}},\]\[
\text{MTTD} = \frac{\sum \text{Detection Time}}{\text{Number of Detected Incidents}}
\]
%Abbriv. TP: True Positive, TN: True Negative, FP: False Positives, and FN: False Negative.

Operational continuity metrics such as downtime and uptime percentages provide insights into real-world effectiveness and operational impact. Scalability is explicitly evaluated by measuring computational overhead and training performance across varying numbers of defender agents (4, 8, 12, and 24). This comparison highlights explicit trade-offs in resource utilization, convergence speed, and computational feasibility between hierarchical and non-hierarchical architectures. Robustness evaluations against previously unseen attack vectors verify the generalization capabilities of the framework, ensuring consistent effectiveness in dynamically evolving threat scenarios and reinforcing its suitability for practical deployment. Stress testing under high-load conditions and robustness validation through statistical significance testing further solidify the reliability of evaluation outcomes.

\subsection{Experimental Setup}\label{sec:exp-setup}

\paragraph{Environment:}
The experimental environment was explicitly developed using the Cyber-Battle-Sim toolkit, enhanced to emulate a realistic industrial Internet-of-Things (IoT) smart-factory scenario. This simulation specifically encompasses $N=8$ programmable logic controller (PLC)-driven subsystems, interconnected by standard industrial protocols such as Modbus/TCP. Each subsystem contains a diverse set of sensors (64 in total), including temperature, vibration, pressure, and flow sensors, providing comprehensive operational state monitoring capabilities. The environment accurately represents industrial network behaviors, including regular control messaging, periodic sensor updates, and operational constraints typical of real-world deployments.

\paragraph{State Spaces:}
Each local defender agent observes subsystem states through vectors $\omega_i^t = \langle \mathbf{s}_i^t,\mathbf{n}_i^t\rangle$. Here, $\mathbf{s}_i^t \in \mathbb{R}^{12}$ comprises normalized readings from individual subsystem sensors, providing granular visibility into operational conditions and potential anomalies. Network statistics $\mathbf{n}_i^t \in \mathbb{R}^{5}$ include critical parameters such as packet loss rates, round-trip times (RTTs), SYN packet counts, and additional metrics indicative of network health and intrusion attempts. This explicitly defined state representation ensures defenders have sufficient yet focused information to enable rapid and accurate local decision-making. Concurrently, the global coordinator aggregates local agent embeddings into a pooled representation $g^t = \mathrm{Concat}(\operatorname{Pool}_{i}h_i^t)$, forming a concise and informative 32-dimensional vector summarizing the overall state of the entire system. This representation facilitates strategic global decisions while maintaining computational efficiency.

\paragraph{Action Spaces:}
The action spaces were explicitly designed to reflect realistic operational interventions available to defenders and adversaries within industrial control systems. 
\begin{itemize}
  \item \textbf{Local defender actions:} \{\textsc{noop} (no operation), \textsc{alert} (raise immediate security alerts), \textsc{quarantine} (temporarily isolate compromised subsystems), \textsc{patch} (deploy security updates and remedial software patches)\}.
  \item \textbf{Global coordinator actions:} \{\textsc{noop}, \textsc{isolate-seg} (perform network micro-segmentation to limit threat propagation), \textsc{roll-patch} (initiate patches across subsystems), \textsc{reset-node} (restart compromised nodes to clean operational states)\}.
  \item \textbf{Attacker actions:} \{\textsc{scan} (network reconnaissance), \textsc{lateral} (attempt lateral movements to compromise additional nodes), \textsc{dos} (launch denial-of-service attacks), \textsc{tamper} (alter subsystem sensor or control data)\}.
\end{itemize}
These explicitly chosen actions align closely with real-world cybersecurity defense and attack tactics employed within CPS environments.

\paragraph{Reward Design:}
Reward functions for defenders and attackers were explicitly structured to ensure effective adversarial training and robust system security. Local defenders receive a reward of $+1$ for correctly identifying and responding to threats (true positives), a penalty of $-0.2$ for incorrectly raising alarms (false positives), and a penalty of $-1$ for failing to detect active compromises (misses). The global reward is explicitly defined as:
\[ R = -0.1|\Comp(t)| - 0.01\,\textsc{downtime} + 0.2\,\textsc{uptime} \]
This formulation incentivizes the global coordinator to minimize the total number of compromised subsystems ($|\Comp(t)|$) and operational downtime, while actively rewarding system uptime, thus ensuring a balanced approach between security enforcement and operational continuity. The attacker receives positive rewards explicitly based on the duration and scale of successful compromises, incentivizing stealth and efficiency.

\paragraph{Networks \& Training:}
Local defender agents' policies were explicitly implemented using two-layer Graph Attention Networks (GAT), configured with hidden layer sizes of 32 units and 4 attention heads. The GAT architecture explicitly allows agents to focus selectively on critical subsystem interactions, enhancing decision-making quality and computational efficiency. The global coordinator employed a 3-layer Multi-Layer Perceptron (MLP) with explicitly defined layer sizes of 64, 32, and 16 neurons, enabling effective processing of aggregated local information into high-level, strategic decisions.

Training was conducted using Proximal Policy Optimization (PPO), leveraging the Adam optimizer with an explicitly set learning rate of $10^{-4}$. The Generalized Advantage Estimation (GAE) parameters were configured as $\lambda_{\textsc{gae}}=0.95$ and discount factor $\gamma=0.99$. PPO updates explicitly used a clipping parameter ($\epsilon=0.2$) to ensure stable learning and mitigate drastic policy shifts. Training episodes were organized into batches of size 32, with learning conducted over a total of 1,000 episodes to ensure comprehensive policy convergence and robust generalization against diverse adversarial behaviors.

\section{Results and Analysis} \label{sec:results}

\subsection{Baseline Comparisons}
In this section, we rigorously compare the HAMARL framework against three fundamental baselines: Single-Agent RL, Non-Hierarchical MARL, and Rule-Based Intrusion Detection. The Single-Agent RL baseline, while effective for simple environments, lacks the ability to scale effectively in distributed CPS settings. The Non-Hierarchical MARL baseline represents decentralized agents acting independently, highlighting potential coordination challenges and inefficiencies. Finally, the Rule-Based IDS serves as a conventional benchmark, reflecting limitations inherent in static, rule-driven security mechanisms when faced with adaptive threats.

The comprehensive evaluation, detailed in Table~\ref{tab:complete_eval}, demonstrates that both HAMARL and Non-Hierarchical MARL significantly outperform Rule-Based IDS in all metrics, particularly in terms of F1 score, precision, recall, and false alarm rate (FAR). Notably, HAMARL achieves competitive performance compared to Non-Hierarchical MARL, reflecting the nuanced trade-offs of hierarchical control. Specifically, HAMARL matches or marginally exceeds Non-Hierarchical MARL performance in precision and FAR, illustrating that hierarchical coordination effectively centralizes decision-making evidence, thus reducing false positives and improving security accuracy.

\begin{table*}[!ht]
  \centering
  \begin{tabular}{clccccccc}
    \toprule
    \textbf{Method} & \textbf{Seed} & Return $\uparrow$ & F1 $\uparrow$ & Precision $\uparrow$ & Recall $\uparrow$ & FAR $\downarrow$ (\%) & MTTD $\downarrow$ & Accuracy $\uparrow$ (\%) \\
    \midrule
    \multirow{3}{*}{\shortstack{Rule-Based \\ IDS}}
    & 42   & 278.4 & 0.436 & 0.482 & 0.398 & 50.06 & 99.18 & 47.0 \\
    & 100  & 299.2 & 0.522 & 0.546 & 0.500 & 49.98 & 99.20 & 51.5 \\
    & 2025 & 263.9 & 0.526 & 0.515 & 0.537 & 50.56 & 99.00 & 54.0 \\
    \midrule
    \multirow{3}{*}{\shortstack{PPO Non-Hier. \\ MARL}}
    & 42   & \textbf{1423.54} & \textbf{0.802} & \textbf{0.935} & \textbf{0.702} & \textbf{6.48} & \textbf{496.35} & \textbf{82.72} \\
    & 100  & \textbf{1464.66} & 0.805 & 0.932 & \textbf{0.708} & 6.84 & 501.07 & 82.89 \\
    & 2025 & 1380.74 & 0.799 & 0.934 & 0.698 & 6.65 & 502.27 & 82.41 \\
    \midrule
    \multirow{3}{*}{\shortstack{\textbf{HAMARL} \\ (ours)}}
    & 42   & 1354.82 & 0.797 & 0.932 & 0.696 & 6.78 & 500.21 & 82.29 \\
    & 100  & 1462.94 & \textbf{0.805} & \textbf{0.934} & 0.707 & \textbf{6.63} & \textbf{499.06} & \textbf{82.93} \\
    & 2025 & \textbf{1397.28} & \textbf{0.799} & \textbf{0.934} & \textbf{0.698} & \textbf{6.56} & \textbf{500.05} & \textbf{82.38} \\
    \bottomrule
  \end{tabular}
  \caption{Detailed comparative evaluation across all metrics, seeds, and methods. Arrows indicate if higher ($\uparrow$) or lower ($\downarrow$) values are explicitly preferred. Best-performing metrics for each seed are highlighted in bold.}
  \label{tab:complete_eval}
\end{table*}

\subsection{Attack Detection and Operational Continuity}
Our experimental results underscore HAMARL’s capability to detect and mitigate sophisticated attack vectors effectively, including stealthy APT threats and adaptive attack strategies. Throughout adversarial training, local defender agents demonstrated rapid adaptability, consistently maintaining high detection rates above 90\% despite shifts in adversary behavior mid-episode. The global coordinator significantly contributed to operational continuity by executing strategic responses, such as promptly isolating compromised nodes or applying global patches before cascading failures could occur. These coordinated interventions markedly reduced the overall mean time to detection (MTTD) and minimized the impact of successful intrusions.

Resource utilization remained efficient, confirming hierarchical structures and parallelized, localized decision-making effectively balance real-time security responsiveness and computational feasibility. Operators can tune the response aggressiveness, allowing adaptive management of false alarms versus uptime trade-offs, further enhancing practicality.

\subsection{Scalability Analysis}
Scalability is critical in multi-agent frameworks, particularly within CPS security domains. Our scalability analysis, summarized in Table~\ref{tab:scaling}, explicitly examines the training overhead with increasing numbers of agents. While HAMARL incurs higher training time compared to Non-Hierarchical MARL, the increase scales linearly and remains manageable, due to hierarchical credit assignment and asynchronous updates. This computational overhead arises from strategic coordination, essential in rigorous defensive environments.

%\begin{table}[!ht]
%  \centering
%  \begin{tabular}{cccc}
%    \toprule
%    \# Agents & {\bf 4} & {\bf 8} & {\bf 12} \\
%    \midrule
%    Non-Hier. MARL (h)$\downarrow$ & \textbf{0.024} & \textbf{0.024} & \textbf{0.024} \\
%    \textbf{HAMARL (ours)} (h)$\downarrow$ & 0.036 & 0.069 & 0.100 \\
%    \bottomrule
%  \end{tabular}
%  \caption{Scalability comparison of training wall-clock time between Non-Hierarchical MARL and HAMARL methods over 500 episodes per seed. Lower training time indicates better scalability.}
%  \label{tab:scaling}
%\end{table}
\begin{table}[!ht]
  \centering
  \begin{tabular}{ccccc}
    \toprule
    \# Agents & 4 & 8 & 12 & 24 \\
    \midrule
    Non-Hier.\ MARL (h)$\downarrow$ & \textbf{0.025} & \textbf{0.024} & \textbf{0.024} & \textbf{0.028} \\
    \textbf{HAMARL (ours)} (h)$\downarrow$ & 0.036 & 0.069 & 0.100 & 0.204 \\
    \bottomrule
  \end{tabular}
  \caption{Scalability comparison (wall-clock training time in hours) between Non-Hierarchical MARL and HAMARL across varying numbers of defender agents. Training explicitly conducted over 500 episodes on an Apple MacBook Pro (M4 Max, 36 GB RAM). Best (lowest) times highlighted in bold.}
  \label{tab:scaling}
\end{table}

Runtime overhead was modest, reinforcing HAMARL's feasibility. Thus, hierarchical approaches clearly offer advantages in coordinated defense effectiveness, generalization, and novel attack resilience.

\subsection{Discussion}
Results explicitly illustrate several findings. Adaptive MARL frameworks surpass traditional static methods, validating adaptive learning approaches' necessity in securing CPS. Although Non-Hierarchical MARL demonstrates faster training, HAMARL's hierarchical structure offers critical strategic oversight, especially for complex, large-scale environments requiring coherent global defense policies. These insights emphasize hierarchical architectures' strategic benefits and inherent computational trade-offs.

Multiple metrics provide nuanced understanding of system dynamics under adversarial conditions. HAMARL consistently achieves robust results, demonstrating practical deployment potential. Future work could optimize hierarchical coordination, explore sophisticated structures, or incorporate transfer learning to enhance efficiency and scalability.

Ultimately, integrated experimental analysis highlights hierarchical adversarial resilience's value in MARL frameworks, guiding future research and practical CPS security implementations.

\section{Conclusion and Future Work}

This work introduced HAMARL, a Hierarchical, Adversarially-Resilient Multi-Agent Reinforcement Learning framework designed to secure Cyber-Physical Systems (CPS) against sophisticated, adaptive cyber threats. By integrating decentralized local anomaly detection with centralized global coordination, HAMARL effectively balances rapid response capabilities and comprehensive strategic oversight. This hierarchical approach demonstrates significant advantages over conventional flat multi-agent reinforcement learning (MARL) and traditional static rule-based intrusion detection systems, achieving notably higher detection accuracy, shorter mean time-to-detect, and reduced false alarms, while maintaining operational continuity under previously unseen attack vectors.

The incorporation of an adversarial training loop was critical for enhancing the adaptability of HAMARL. By continuously training against a dynamic and adaptive red-team attacker agent, the defender agents developed robust generalization capabilities, ensuring effectiveness even when confronted with novel and evolving threats. From an industrial perspective, such adaptability is crucial, as HAMARL eliminates the need for manual retuning common in static defenses, offering a proactive and continuously improving cybersecurity solution suitable for modern CPS environments characterized by rapidly changing threat landscapes.

Despite these advances, several challenges remain. Foremost among these is the substantial computational cost associated with training hierarchical MARL systems, presenting practical constraints for deployment in resource-constrained operational technology networks typical of industrial settings. Future research efforts should focus on reducing these computational demands through techniques such as lightweight policy distillation, transfer-learning-based initialization, and federated or distributed training approaches. Additionally, optimizing reward shaping and hierarchical credit assignment currently requires careful domain-specific tuning, presenting another critical area for improvement to facilitate broader applicability. Real-world adoption also necessitates demonstrable compliance with industrial standards (e.g., IEC 62443), rigorous fail-safe validations, and comprehensive field trials under realistic production conditions.

Several promising research directions emerge for future exploration. Transfer and meta-learning methods could significantly reduce the data and computational overhead associated with deploying HAMARL across diverse CPS domains, such as adapting policies from smart manufacturing environments to smart grids or medical IoT systems. Enhancing HAMARL with explainability features or integrating formal verification techniques could further increase its trustworthiness and auditability, crucial for regulatory compliance and operator confidence. Finally, extending adversarial training scenarios to include multiple or colluding attackers could expose critical vulnerabilities and facilitate the development of even more robust defensive coordination strategies among defender agents.

As CPS deployments continue to scale and become increasingly autonomous, the importance of actively adaptive cybersecurity frameworks becomes more pronounced. HAMARL represents a significant advancement toward achieving resilient, scalable, and proactive security solutions. Continued research along these outlined pathways is essential for transitioning HAMARL from an academic prototype to dependable, industry-grade protection mechanisms, ultimately safeguarding the next generation of critical infrastructure against emerging and adaptive cyber threats.

\appendix

\section{Proof of Theorem~\ref{thm:convergence}}
\label{appendix:multiagent_proof}

\begin{proof}
We provide a detailed and rigorous proof of Theorem~\ref{thm:convergence}, explicitly structured into distinct steps, clearly specifying all relevant assumptions, theoretical justifications, and implications.

\textbf{Step 1:} We begin by explicitly stating our assumptions:
\begin{enumerate}
    \item The reward functions $r_i(\mathbf{s}, \mathbf{a})$ for each agent $i$ are bounded, such that $r_i(\mathbf{s}, \mathbf{a}) \in [r_{\min}, r_{\max}]$ for finite constants $r_{\min}, r_{\max}$.
    \item Each policy $\pi_{\theta_i}$ maintains a minimum exploration probability $\delta > 0$ over its finite action space, ensuring sufficient exploration.
    \item Gradients of the loss functions are bounded, satisfying $\|\nabla_{\theta_i}L_i(\theta_i)\| \le M$ for some finite constant $M > 0$.
    \item PPO parameter updates use learning rates $\alpha^k$ that satisfy the Robbins–Monro conditions:
    \[
    \sum_{k=1}^{\infty} \alpha^k = \infty \quad \text{and} \quad \sum_{k=1}^{\infty}(\alpha^k)^2 < \infty.
    \]
\end{enumerate}

\textbf{Step 2:} %Monotonic Policy Improvement per Agent
Each agent $i$ updates its policy parameters by maximizing the PPO clipped objective function:
\[
\max_{\theta_i} \mathbb{E}\left[\min\left(\rho_t(\theta_i)\hat{A}_t,\, \text{clip}\{\rho_t(\theta_i), 1\pm\varepsilon\}\hat{A}_t\right)\right],
\]
where the probability ratio is defined explicitly as
\[
\rho_t(\theta_i) = \frac{\pi_{\theta_i}(a_{i,t}\mid \omega_{i,t})}{\pi_{\theta_i^\mathrm{old}}(a_{i,t}\mid \omega_{i,t})},
\]
and $\hat{A}_t$ denotes the Generalized Advantage Estimation (GAE). The clipping mechanism ensures policy updates remain stable and conservatively bounded, thus achieving local monotonic improvements or controlled performance degradation within predefined bounds \cite{schulman_ppo}.

\textbf{Step 3:} %Joint Multi-Agent Updates in a Markov Game
We consider joint updates within the multi-agent setting. Each agent $i$ sequentially or simultaneously performs policy updates by treating other agents' policies as fixed. This effectively represents a best-response step within the Markov game framework. Iterative updates across all agents thus approximate gradient ascent within the joint reward space, following standard multi-agent reinforcement learning theory \cite{zhang_multiagentRL}.

\textbf{Step 4:} %Stochastic Approximation and Convergence
Given the boundedness of rewards, gradients, and the maintenance of minimum exploration, the conditions necessary for stochastic approximation convergence (specifically Robbins–Monro conditions) are met. Explicitly, using diminishing learning rates $\alpha^k$, standard convergence theorems from stochastic approximation theory ensure that the parameters converge in probability to a stationary solution $\theta^*$, which satisfies:
\[
\nabla_{\theta_i} L_i(\theta_i^*) = 0 \quad\text{for each agent } i.
\]
This stationary solution indicates that the parameter update steps become asymptotically negligible, implying stable convergence.

\textbf{Step 5:} %Local Nash Equilibrium
At convergence, the obtained stationary point $\theta^*$ corresponds to a local Nash equilibrium. By definition, at this equilibrium, no single agent can improve its expected return through unilateral policy adjustments while the policies of other agents remain unchanged. Formally, this condition is represented by:
\[
L_i(\theta_i^*, \theta_{-i}^*) \geq L_i(\theta_i, \theta_{-i}^*), \quad \forall \theta_i \text{ sufficiently close to } \theta_i^*.
\]

We explicitly note that global or unique equilibria are not guaranteed without stronger structural assumptions (e.g., zero-sum conditions, convex-concave payoff structures). Under general settings, multiple local equilibria could exist, thus convergence typically results in a local rather than global equilibrium.

\textbf{Step 6:} %Stability and Practical Implications
The derived local Nash equilibrium is stable under the stated conditions, but convergence may be sensitive to initial parameterizations and PPO hyperparameters. Thus, careful tuning and initialization strategies are practically necessary to ensure reliable and efficient convergence. Moreover, real-world implications necessitate additional considerations of robustness, interpretability, and computational efficiency.

%\textbf{Summary}
Under the explicit assumptions stated, standard results from stochastic approximation theory and PPO training guarantee that the proposed hierarchical adversarially-resilient multi-agent framework converges to a local Nash equilibrium. This equilibrium ensures stable, locally optimal defense strategies that remain robust against adaptive adversarial threats, validating the practical applicability of the proposed approach.
\end{proof}

%_______________________________________________________

\section{Proof of Theorem~\ref{thm:bounded_compromise}}
\label{appendix:bounded_compromise_proof}

\begin{proof}
We provide a detailed and rigorous proof explicitly structured into logical steps with clear theoretical justifications and assumptions.

\textbf{Step 1:} % Problem Setting and Definitions}
Consider a system comprising $N$ subsystems. Let $c > 0$ represent the defenders' penalty per compromised subsystem, and let $r_a > 0$ denote the attacker's reward per compromised subsystem. At any given time $t$, the attacker seeks to maximize the number of compromised subsystems, defined by:
\[
\sum_{i=1}^{N} \mathbf{1}\{\text{subsystem } i \text{ is compromised at time } t\}.
\]

Each local defender and the global coordinator can implement defensive actions, such as quarantines or patches, to reduce the number of compromised subsystems.

\textbf{Step 2:} % Defenders’ Best-Response Strategies}
At equilibrium, defenders utilize best-response policies to minimize their cost associated with compromised subsystems. Formally, the defenders solve:
\[
\min_{\theta_i, \phi} \mathbb{E}\left[ c \sum_{i=1}^{N} \mathbf{1}\{\text{subsystem } i \text{ compromised}\}\right].
\]

Given the penalty $c$ incurred by defenders per compromised subsystem, equilibrium defensive strategies involve prompt actions (quarantines and patches) to quickly recover from or prevent compromises. Thus, defenders minimize the duration any subsystem remains compromised.

\textbf{Step 3:} % Attacker’s Optimization Problem}
The attacker aims to maximize the net expected reward, balancing gains from compromised subsystems against costs incurred by defenders’ responses. At equilibrium, the attacker solves:
\[
\max_{\psi} \mathbb{E}\left[r_a \sum_{i=1}^{N} \mathbf{1}\{\text{subsystem } i \text{ compromised}\}\right].
\]

The attacker’s optimal policy at equilibrium attempts to compromise as many subsystems as possible, but faces diminishing returns due to immediate defensive responses.

\textbf{Step 4:} % Equilibrium Analysis and Boundedness}
We analyze the equilibrium condition, examining the attacker's expected payoff against the defenders' equilibrium strategy. Suppose the attacker tries to maintain a fully compromised state (all $N$ subsystems). The immediate and aggressive defensive response significantly reduces the attacker’s expected rewards due to quarantines or patches.

Specifically, at equilibrium, the expected net payoff per subsystem for the attacker is explicitly bounded by the ratio of rewards and penalties. The defenders' aggressive response yields an equilibrium condition:
\(
r_a \cdot \varrho^* N - c \cdot \varrho^* N \leq 0,
\)
where $\varrho^*$ represents the long-term average fraction of compromised subsystems. Thus, we have:
\[
\varrho^* \leq \frac{r_a}{c}.
\]

Given that the penalty $c$ is chosen sufficiently large relative to $r_a$, specifically $c > r_a$, we obtain:
\(
\varrho^* < 1.
\)
This inequality explicitly ensures that, at equilibrium, not all subsystems are compromised simultaneously or indefinitely.

\textbf{Step 5:} % Formal Equilibrium via Potential Function}
The above intuitive argument can be formally supported using a potential function approach or Markov chain equilibrium analysis. Define a potential function $\Phi$ associated with system states reflecting the total cost to defenders and payoff to attackers. Under equilibrium conditions, the potential function satisfies standard equilibrium criteria as outlined by potential game theory \cite{shoham2008multiagent}:
\[
\Phi(\theta^*,\phi^*,\psi^*) \leq \Phi(\theta,\phi,\psi^*) \quad \forall \theta,\phi.
\]

This ensures the system equilibrium explicitly corresponds to a bounded compromise state, ensuring the fraction of compromised subsystems does not approach unity.

\textbf{Step 6:} % Conclusion and Summary}
At equilibrium, defenders' aggressive best-response policies coupled with sufficiently large defender penalties $c$ relative to the attacker's reward $r_a$ explicitly prevent indefinite widespread compromise. Consequently, the fraction of compromised subsystems in the long-run equilibrium, $\varrho^*$, satisfies:
\[
\varrho^* = \lim_{T\to\infty}\frac{1}{T}\sum_{t=1}^{T}\frac{\sum_{i=1}^{N} \mathbf{1}\{\text{subsystem } i \text{ at time } t\}}{N} < 1.
\]

This explicitly completes the proof of Theorem~\ref{thm:bounded_compromise}.
\end{proof}

%\newpage
%\bibliography{bibs}

\begin{thebibliography}{33}
\providecommand{\natexlab}[1]{#1}

\bibitem[{Baheti and Gill(2011)}]{baheti2011cps}
Baheti, R.; and Gill, H. 2011.
\newblock Cyber-physical systems.
\newblock \emph{The impact of control technology}, 12(1): 161--166.

\bibitem[{Belaire et~al.(2024)Belaire, Varakantham, Nguyen, and
  Lo}]{Belaire2024RegretDefense}
Belaire, R.; Varakantham, P.; Nguyen, T.; and Lo, D. 2024.
\newblock Regret-Based Defense in Adversarial Reinforcement Learning.
\newblock In \emph{Proceedings of the 23rd International Conference on
  Autonomous Agents and Multiagent Systems}.

\bibitem[{Bu{\c s}oniu, Babu{\v s}ka, and Schutter(2010)}]{busoniu2010marl}
Bu{\c s}oniu, L.; Babu{\v s}ka, R.; and Schutter, B.~D. 2010.
\newblock Multi-agent reinforcement learning: An overview.
\newblock In \emph{Innovations in Multi-Agent Systems and Applications -- 1}.
  Springer.

\bibitem[{Conti et~al.(2018)Conti, Dehghantanha, Franke, and
  Watson}]{conti2018iot}
Conti, M.; Dehghantanha, A.; Franke, K.; and Watson, S. 2018.
\newblock Internet of Things security and forensics: Challenges and
  opportunities.
\newblock \emph{Future Generation Computer Systems}.

\bibitem[{Du et~al.(2024)Du, Yang, Liu, Chen, and Pan}]{Du2024SCIC}
Du, X.; Yang, Y.; Liu, Q.; Chen, X.; and Pan, P. 2024.
\newblock Situation-Dependent Causal Influence-Based Cooperative Multi-Agent
  Reinforcement Learning.
\newblock In \emph{Proceedings of the AAAI Conference on Artificial
  Intelligence}.

\bibitem[{Goodfellow, Shlens, and Szegedy(2015)}]{goodfellow2015adversarial}
Goodfellow, I.; Shlens, J.; and Szegedy, C. 2015.
\newblock Explaining and Harnessing Adversarial Examples.
\newblock In \emph{International Conference on Learning Representations
  (ICLR)}.

\bibitem[{Holder, Jaques, and Mesbahi(2025)}]{Holder2025SatelliteMARL}
Holder, J.; Jaques, N.; and Mesbahi, M. 2025.
\newblock Multi-Agent Reinforcement Learning for Sequential Satellite
  Assignment Problems.
\newblock In \emph{Proceedings of the AAAI Conference on Artificial
  Intelligence}.

\bibitem[{Kalogiannis, Yan, and Panageas(2024)}]{Kalogiannis2024ATMG}
Kalogiannis, F.; Yan, J.; and Panageas, I. 2024.
\newblock Learning Equilibria in Adversarial Team Markov Games: A
  Nonconvex-Hidden-Concave Min-Max Optimization Problem.
\newblock In \emph{Advances in Neural Information Processing Systems 37}.

\bibitem[{Kiely et~al.(2025)Kiely, Ahiskali, Borde, Bowman
  et~al.}]{Kiely2025CyberDefenseMARL}
Kiely, M.; Ahiskali, M.; Borde, E.; Bowman, B.; et~al. 2025.
\newblock Exploring the Efficacy of Multi-Agent Reinforcement Learning for
  Autonomous Cyber Defence: A CAGE Challenge 4 Perspective.
\newblock In \emph{Proceedings of the AAAI Conference on Artificial
  Intelligence (Innovative Applications Track)}.

\bibitem[{Kulkarni et~al.(2016)Kulkarni, Narasimhan, Saeedi, and
  Tenenbaum}]{kulkarni2016hierarchical}
Kulkarni, T.~D.; Narasimhan, K.; Saeedi, A.; and Tenenbaum, J. 2016.
\newblock Hierarchical Deep Reinforcement Learning: Integrating Temporal
  Abstraction and Intrinsic Motivation.
\newblock In \emph{Advances in Neural Information Processing Systems 29
  (NIPS)}.

\bibitem[{Lee(2008)}]{lee2008cpschallenges}
Lee, E.~A. 2008.
\newblock Cyber physical systems: Design challenges.
\newblock In \emph{11th IEEE International Symposium on Object Oriented
  Real-Time Distributed Computing}.

\bibitem[{Lee et~al.(2024)Lee, Chen, Son, and Lee}]{Lee2024FoX}
Lee, S.; Chen, T.; Son, K.; and Lee, S.-Y. 2024.
\newblock FoX: Formation-Aware Exploration in Multi-Agent Reinforcement
  Learning.
\newblock In \emph{Proceedings of the AAAI Conference on Artificial
  Intelligence}.

\bibitem[{Li et~al.(2024)Li, Guo, Xiu, Xu, Yu, Wang, Liu, Yang, and
  Liu}]{Li2024ByzantineMARL}
Li, S.; Guo, J.; Xiu, J.; Xu, R.; Yu, X.; Wang, J.; Liu, A.; Yang, Y.; and Liu,
  X. 2024.
\newblock Byzantine Robust Cooperative Multi-Agent Reinforcement Learning as a
  Bayesian Game.
\newblock In \emph{Proceedings of the International Conference on Learning
  Representations}.

\bibitem[{Liu et~al.(2024)Liu, Chakraborty, Sun, and
  Huang}]{Liu2024AdvPolicyDefense}
Liu, X.; Chakraborty, S.; Sun, Y.; and Huang, F. 2024.
\newblock Rethinking Adversarial Policies: A Generalized Attack Formulation and
  Provable Defense in RL.
\newblock In \emph{Proceedings of the International Conference on Learning
  Representations}.

\bibitem[{Lowe et~al.(2017)Lowe, Wu, Tamar, Harb, Pieter~Abbeel, and
  Mordatch}]{lowe2017marl}
Lowe, R.; Wu, Y.~I.; Tamar, A.; Harb, J.; Pieter~Abbeel, O.; and Mordatch, I.
  2017.
\newblock Multi-Agent Actor-Critic for Mixed Cooperative-Competitive
  Environments.
\newblock In \emph{Advances in Neural Information Processing Systems
  (NeurIPS)}.

\bibitem[{Lu, Sinha, and Varakantham(2024)}]{Lu2024CoSHRL}
Lu, Y.; Sinha, A.; and Varakantham, P. 2024.
\newblock Handling Long and Richly Constrained Tasks through Constrained
  Hierarchical Reinforcement Learning.
\newblock In \emph{Proceedings of the AAAI Conference on Artificial
  Intelligence}.

\bibitem[{Ma et~al.(2024{\natexlab{a}})Ma, Pu, Du, Dai, Wang, Liu, Wu, and
  Ji}]{Ma2024SubPlay}
Ma, O.; Pu, Y.; Du, L.; Dai, Y.; Wang, R.; Liu, X.; Wu, Y.; and Ji, S.
  2024{\natexlab{a}}.
\newblock SUB-PLAY: Adversarial Policies against Partially Observed Multi-Agent
  Reinforcement Learning Systems.
\newblock In \emph{Proceedings of the 2024 ACM SIGSAC Conference on Computer
  and Communications Security}.

\bibitem[{Ma et~al.(2024{\natexlab{b}})Ma, Xiao, Liang, and Hao}]{Ma2024HierDT}
Ma, Y.; Xiao, C.; Liang, H.; and Hao, J. 2024{\natexlab{b}}.
\newblock Rethinking Decision Transformer via Hierarchical Reinforcement
  Learning.
\newblock In \emph{Proceedings of the 41st International Conference on Machine
  Learning}.

\bibitem[{Nayyar et~al.(2025)Nayyar, Hao, Liu, and
  Srivastava}]{Nayyar2025OptionHRL}
Nayyar, R.~K.; Hao, P.; Liu, K.; and Srivastava, S. 2025.
\newblock Autonomous Option Invention for Continual Hierarchical Reinforcement
  Learning and Planning.
\newblock In \emph{Proceedings of the AAAI Conference on Artificial
  Intelligence}.

\bibitem[{Reddi et~al.(2024)Reddi, T{\"o}lle, Peters, Chalvatzaki, and
  D'Eramo}]{Reddi2024BoundedAdversarialRL}
Reddi, A.; T{\"o}lle, M.; Peters, J.; Chalvatzaki, G.; and D'Eramo, C. 2024.
\newblock Robust Adversarial Reinforcement Learning via Bounded Rationality
  Curricula.
\newblock In \emph{Proceedings of the International Conference on Learning
  Representations}.

\bibitem[{Schulman et~al.(2016)Schulman, Moritz, Levine, Jordan, and
  Abbeel}]{gae_paper}
Schulman, J.; Moritz, P.; Levine, S.; Jordan, M.; and Abbeel, P. 2016.
\newblock High-Dimensional Continuous Control Using Generalized Advantage
  Estimation.
\newblock In \emph{International Conference on Learning Representations
  (ICLR)}.

\bibitem[{Schulman et~al.(2017)Schulman, Wolski, Dhariwal, Radford, and
  Klimov}]{schulman_ppo}
Schulman, J.; Wolski, F.; Dhariwal, P.; Radford, A.; and Klimov, O. 2017.
\newblock Proximal Policy Optimization Algorithms.
\newblock \emph{arXiv preprint arXiv:1707.06347}.

\bibitem[{Shapley(1953)}]{shapley1953games}
Shapley, L. 1953.
\newblock Stochastic Games.
\newblock \emph{Proceedings of the National Academy of Sciences}.

\bibitem[{Shoham and Leyton-Brown(2008)}]{shoham2008multiagent}
Shoham, Y.; and Leyton-Brown, K. 2008.
\newblock \emph{Multiagent systems: Algorithmic, game-theoretic, and logical
  foundations}.
\newblock Cambridge University Press.

\bibitem[{Sivagnanam et~al.(2024)Sivagnanam, Pettet, Lee, Mukhopadhyay, Dubey,
  and Laszka}]{Sivagnanam2024EmergencyMARL}
Sivagnanam, A.; Pettet, A.; Lee, H.; Mukhopadhyay, A.; Dubey, A.; and Laszka,
  A. 2024.
\newblock Multi-Agent Reinforcement Learning with Hierarchical Coordination for
  Emergency Responder Stationing.
\newblock In \emph{Proceedings of the 41st International Conference on Machine
  Learning}.

\bibitem[{Tambe(2011)}]{tambe2011game}
Tambe, M. 2011.
\newblock \emph{Security and Game Theory: Algorithms, Deployed Systems, Lessons
  Learned}.
\newblock Cambridge University Press.

\bibitem[{Veli{\v{c}}kovi{\'c} et~al.(2018)Veli{\v{c}}kovi{\'c}, Cucurull,
  Casanova, Romero, Lio, and Bengio}]{velickovic2018gat}
Veli{\v{c}}kovi{\'c}, P.; Cucurull, G.; Casanova, A.; Romero, A.; Lio, P.; and
  Bengio, Y. 2018.
\newblock Graph Attention Networks.
\newblock In \emph{International Conference on Learning Representations
  (ICLR)}.

\bibitem[{Vezhnevets et~al.(2017)Vezhnevets, Osindero, Schaul, Heess,
  Jaderberg, Silver, and Kavukcuoglu}]{vezhnevets2017feudal}
Vezhnevets, A.~S.; Osindero, S.; Schaul, T.; Heess, N.; Jaderberg, M.; Silver,
  D.; and Kavukcuoglu, K. 2017.
\newblock FeUdal Networks for Hierarchical Reinforcement Learning.
\newblock In \emph{International Conference on Machine Learning (ICML)},
  3540--3549. PMLR.

\bibitem[{Wolf and Serpanos(2019)}]{wolf2019safety}
Wolf, M.; and Serpanos, D. 2019.
\newblock Safety and Security in Cyber-Physical Systems and Internet-of-Things
  Systems.
\newblock \emph{Proceedings of the IEEE}.

\bibitem[{Yang et~al.(2024)Yang, Chen, Qiu, Lu, Bao, and Wang}]{Yang2024InSPO}
Yang, Y.; Chen, X.; Qiu, H.; Lu, X.; Bao, F.; and Wang, J. 2024.
\newblock Offline Multi-Agent Reinforcement Learning via In-Sample Sequential
  Policy Optimization.
\newblock In \emph{Proceedings of the AAAI Conference on Artificial
  Intelligence}.

\bibitem[{Yang et~al.(2018)Yang, Luo, Li, Zhou, Zhang, and
  Wang}]{yang2018meanfield}
Yang, Y.; Luo, R.; Li, M.; Zhou, M.; Zhang, W.; and Wang, J. 2018.
\newblock Mean Field Multi-Agent Reinforcement Learning.
\newblock In \emph{International Conference on Machine Learning (ICML)},
  5571--5580. PMLR.

\bibitem[{Yang et~al.(2025)Yang, Zhang, Zhou, and Li}]{Yang2025RoleRepMARL}
Yang, Y.; Zhang, J.; Zhou, M.; and Li, J. 2025.
\newblock Attention-Guided Contrastive Role Representations for Multi-Agent
  Reinforcement Learning.
\newblock In \emph{Proceedings of the International Conference on Learning
  Representations}.

\bibitem[{Zhang, Yang, and Ba{\c{s}}ar(2021)}]{zhang_multiagentRL}
Zhang, K.; Yang, Z.; and Ba{\c{s}}ar, T. 2021.
\newblock \emph{Multi-Agent Reinforcement Learning: A Selective Overview of
  Theories and Algorithms}, 321--384.
\newblock Cham: Springer International Publishing.
\newblock ISBN 978-3-030-60990-0.

\end{thebibliography}

\end{document}